\def\BibTeX{{\rm B\kern-.05em{\sc i\kern-.025em b}\kern-.08em
		T\kern-.1667em\lower.7ex\hbox{E}\kern-.125emX}}
\pgfplotsset{compat=newest}
\DeclareMathOperator*{\argmin}{arg\,min}
\newtheorem{lemma}{Lemma}
\newcommand{\bs}{\boldsymbol}
\newcommand{\mbf}{\mathbf}
\newcommand{\A}{\mathcal A_{\bs s}}
\newcommand{\rec}{\bs{\widehat \rho}}
\newcommand{\Es}{\mathbb{E}_{\bs s } }
\newcommand{\Eus}{\mathbb E_{\bs u}}
\begin{document}
\bstctlcite{IEEEexample:BSTcontrol}

%\title{Unsupervised deep image recovery using Ensemble Stein's Unbiased Risk Estimate (ENSURE)}
\title{ENSURE: A general approach for unsupervised training of deep image reconstruction algorithms}
\author{Hemant~Kumar~Aggarwal,~\IEEEmembership{Member,~IEEE,}
	    Aniket Pramanik,~\IEEEmembership{Student Member,~IEEE,}
        Maneesh John,
        Mathews~Jacob,~\IEEEmembership{Fellow,~IEEE}%
\thanks{Hemant~Kumar~Aggarwal (email: jnu.hemant@gmail.com), Aniket Pramanik (email: aniket-pramanik@uiowa.edu), Maneesh John (email: maneesh-john@uiowa.edu), and Mathews Jacob (email: mathews-jacob@uiowa.edu) are with the Department of Electrical and Computer Engineering, University of Iowa, Iowa City, IA, USA, 52242.
}% <-this % stops a space
\thanks{Manuscript received Month day, year; revised Month day, year.}
\thanks{This work is supported by 1R01EB019961-01A1 and 1R01AG067078-01A1. This work was conducted on an MRI instrument funded by 1S10OD025025-01. }
}

\maketitle

\begin{abstract}
Image reconstruction using deep learning algorithms offers improved reconstruction quality and lower reconstruction time than classical compressed sensing and model-based algorithms. Unfortunately, clean and fully sampled ground-truth data to train the deep networks is often unavailable in several applications, restricting  the applicability of the above methods. We introduce a novel metric termed the ENsemble Stein's Unbiased Risk Estimate (ENSURE) framework, which can be used to train deep image reconstruction algorithms without fully sampled and noise-free images. The proposed framework is the generalization of the classical SURE and GSURE formulation to the setting where the images are sampled by different measurement operators, chosen randomly from a set. We evaluate the expectation of the GSURE loss functions over the sampling patterns to obtain the ENSURE loss function. We show that this loss is an unbiased estimate for the true mean-square error, which offers a better alternative to GSURE, which only offers an unbiased estimate for the projected error. Our experiments show that the networks trained with this loss function can offer reconstructions comparable to the supervised setting. While we demonstrate this framework in the context of MR image recovery, the ENSURE framework is generally applicable to arbitrary inverse problems. 
\end{abstract}

\begin{IEEEkeywords}
Unsupervised Learning, Inverse Problems, Deep Learning, SURE, MRI.
\end{IEEEkeywords}

\section{Introduction}
The recovery of images from a few of their noisy measurements is a classical inverse problem, which is central to several imaging modalities. For instance, the recovery of MR images from a few of their multi-channel k-space samples using compressed sensing~(CS)~\cite{lustig2008compressed,candes2007sparsity,fessler2010magazine} algorithms is a popular approach to speed up the scans. Deep learning methods have recently emerged as powerful alternatives to CS-based approaches.  These methods rely on convolutional neural network (CNN) models that offer reduced computational complexity and improved performance. These deep learning methods can be categorized as direct inversion schemes~\cite{wangCTtmi2017,jong2019kspace}and model-based schemes~\cite{modl,admmnet,adler2018learned,hammernik,zhang2017magazine,dagan}. The model-based methods account for the imaging physics using a numerical model along with a deep, learned prior. The CNN modules in the above algorithms have been trained in a supervised mode, where an extensive amount of fully sampled training data is required. While large multi-center datasets are emerging for common applications such as brain and knee imaging \cite{fastMRI}, similar datasets are not available for all settings. More importantly, it is often challenging to acquire fully sampled datasets in applications including dynamic and static MRI with high spatial and temporal resolution. Unsupervised methods that can learn the deep networks from undersampled data directly can significantly improve the applicability of these approaches.

The unsupervised optimization of the parameters of image reconstruction and denoising algorithms have a long history. Early approaches relied on L-curve~\cite{lcurve} and generalized cross-validation~\cite{gcv} to optimize the regularization parameters in regularized image recovery. Another approach is to use Stein's Unbiased Risk Estimate (SURE)~\cite{sure} to determine the optimal parameters. The SURE loss is an unbiased estimate of the mean-square-error~(MSE) that only depends on the recovered image and the noisy measurements. For example, SURE-let- and PURE-let-based methods optimize the thresholds in wavelet-based image denoising with Gaussian \cite{sureSink,surelet} and Poisson noise\cite{purelet}. Recently, SURE has been used to train deep image denoisers in \cite{ldampSURE,koreanDenoisingNIPS,koreanReconCVPR2019}. The SURE approach was extended to inverse problems with a rank-deficient measurement operator, which is termed generalized SURE~(GSURE)~\cite{eldarGSURE}. GSURE provides an unbiased estimate of the projected MSE, which is the expected error of the projections in the range space of the measurement operator. GSURE was recently used for inverse problems in \cite{ldampSURE}. Unfortunately, the experiments in \cite{ldampSURE} show that the GSURE-based projected MSE is a poor approximation of the actual MSE in the highly undersampled setting. To improve the performance, the authors trained the denoisers at each iteration in a message-passing algorithm in a layer-by-layer fashion using classical SURE, which is termed LDAMP-SURE \cite{ldampSURE}. A similar approach was used for MRI recovery in \cite{koreanReconCVPR2019}. The GSURE metric was also used in a deep image prior framework recently \cite{deepGSURE}. In the supervised context, the end-to-end training of model-based algorithms \cite{modl,admmnet,adler2018learned,hammernik,zhang2017magazine,dagan} has been shown to be significantly more efficient than the layer-by-layer training strategies \cite{ldampSURE,koreanReconCVPR2019}.

 Unsupervised strategies that do not use SURE have also been recently  introduced. For instance,  Noise2Noise~\cite{noise2noise} relies on a pair of noisy images to train a denoiser without the need for clean images. Noise2Void~\cite{noise2void} is a blind spot method that excludes the central pixel from the network's receptive field. 
 A similar strategy was used in self-supervised learning using data undersampling (SSDU) \cite{noise2void} for the end-to-end training of unrolled algorithms. This approach partitions the acquired measurements into two disjoint sets.  The first set is used for the data consistency and the remaining set is used to define the loss. These methods offer reasonable results in the denoising (full-sampled) and low undersampling setting. However, as the undersampling factor increases, the available measurements are limited, restricting performance.

We introduce a novel loss function termed ensemble Stein's Unbiased Risk Estimate (ENSURE) to train deep networks. We assume that the sampling operator for each image is randomly chosen from a set; we evaluate the expectation of the GSURE losses over the sampling patterns. We show that the resulting loss metric is an unbiased estimate of the true MSE, hence, is a superior loss function when compared to projected SURE \cite{eldarGSURE}. We note that such a measurement scheme can be realized in practice, where a different sampling mask  can be chosen from a lookup table for each image. Similar to classical SURE metrics \cite{sureSink,eldarGSURE}, the proposed ENSURE loss metric is the sum of data consistency and divergence terms. While the divergence term is similar to GSURE, the data consistency term in ENSURE is the sum of the weighted projected losses \cite{eldarGSURE}, where the weighting depends on the class of sampling operators. When the divergence term is not used, the proposed loss reduces to only the projection error, which is observed to offer poor performance. The divergence term serves as a network regularization that minimizes the over-fitting to the measurement noise. The experiments demonstrate that the proposed ENSURE approach can offer results comparable to supervised training. 

The main focus of this paper is to introduce the ENSURE metric for unsupervised training of deep networks. The main contributions of the paper are as follows
\begin{itemize}
	\item We propose to sample different images using different undersampling operators. We introduce an new metric involving the weighted average of projected mean-square errors. We mathematically show that the expectation of this metric is essentially the true mean-square error. 
 \item When fully sampled images are not available, we show that the weighted projected MSE can be approximated using the ENSURE metric, thus eliminating the need for fully sampled training datasets. 
 \item We rigorously validate the proposed ENSURE metric in multiple datasets, undersampling patterns, and with different deep image reconstruction algorithms. We also compare the proposed framework against state-of-the-art unsupervised methods.
\end{itemize}

\section{Background}
In this section, we briefly review the background to make the paper self-contained.

\subsection{Inverse Problems \& Supervised Learning}
We consider the acquisition of the complex-valued image  $\bs{\rho} \in \mathbb{C}^N$ 
using the multichannel measurement operator $\mathcal A_{\bs s}$ as \begin{equation}
\label{fwd}
\bs y_{\bs s} = \mathcal A_{\bs s}~\bs{\rho} + \bs{n}.
\end{equation} 
 We assume the noise~$\bs{n}$ to be complex-valued Gaussian distributed with zero mean and covariance matrix $\mbf C$ such that $\bs{n}\sim  \mathcal N(0,\mbf C)$.  We note that the probability distribution function~(PDF) of $\bs{y_s}$ is given by $
p(\bs{y_{\bs s}}|\bs s) = \mathcal{N}\left(\bs A_{\bs s}\bs{\rho}, \mbf C\right)$. 

In this work, we assume the measurement operator $\mathcal A_{\bs s}$ to be randomly chosen and parameterized by a random vector $\bs s$  that is independent of $\bs n$, drawn from a set $\mathcal S$. For example, $\mathcal A_{\bs s}$ can be viewed as a Fourier transform followed by multiplication with a k-space sampling mask parameterized by $\bs s$ in the single-channel MRI setting. The binary mask $\bs s$ is chosen at random, where the probability of a specific value being one is pre-specified. In the multi-channel setting, $\mathcal A_s$ corresponds to multiplying $\bs \rho$ with the coil sensitivities to obtain multiple sensitivity weighted images, followed by the undersampled Fourier transform of each of the images. The goal of the image reconstruction scheme is to recover $\bs\rho$ from $\bs y_{\bs s}$. 
The inverse problem in~\eqref{fwd} gets simplified to the image denoising model when $\A = \mathcal I$. 

Supervised deep learning methods learn to recover the fully sampled image $\bs{\widehat \rho}$ only from noisy and undersampled measurements $\bs{y_s}$. Let 
\begin{equation}\label{u}
\bs{u}=\A ^H \mathbf C^{-1}\bs{y_s}.
\end{equation}
We note that $\bs u$ is a random variable because of the additive noise in \eqref{fwd}.  In that case, the recovery using a deep neural network $f_\Phi$ with trainable parameters $\Phi$ is represented as
\begin{equation}
\label{recon}
\boldsymbol{\widehat \rho} = f_{\Phi}(\boldsymbol {u}). 
\end{equation}
Here $f_\Phi$ can be a direct inversion or a model-based deep neural network.
Supervised approaches often rely on the loss function:
\begin{equation}
\label{sup_mse}
\text{Sup-Loss}= \sum_{i=1}^{N_{\rm train}}\| \bs{\widehat \rho}_i -  \bs \rho_i \|_2^2
\end{equation}
to train the network using $N_{\rm train}$ number of images.

\subsection{Unsupervised Learning of Denoisers}
\label{denoiser}
When noise-free training data is unavailable, denoising approaches such as Noise2Noise~\cite{noise2noise} and blind-spot methods~\cite{noise2void,ssdu,probN2V} have been introduced for unsupervised learning of the network parameters $\Phi$. Assuming the additive noise $\bs n$ to be Gaussian distributed, the SURE~\cite{sure} approach uses the loss function
\begin{equation}
\label{sure}
{\rm SURE}(\bs{\widehat \rho},\bs{u}) = \| \rec- \bs {u} \|^2_2  + 2  \sigma^2 \nabla_{\bs u} \cdot f_\Phi(\bs{u}) - N\sigma^2, 
\end{equation}
which is an unbiased estimate of the true mean-square error, denoted by 
\begin{equation}\label{mse}
{\rm MSE} = \mathbb E_{\bs u} ~\|\widehat{\bs \rho} - \bs \rho\|^2
\end{equation}
Note that the expression in \eqref{sure} does not depend on the noise-free images $\bs \rho$; it only depends on the noisy images $\bs u$ and the estimates $\bs{\widehat \rho}$. In \eqref{sure}, $\nabla_{\bs u} \cdot f_\Phi(\bs{u})$ represents network divergence, which is often estimated using Monte Carlo simulations~\cite{mcsure}. Several researchers have adapted SURE as a loss function for the unsupervised training of deep image denoisers~\cite{ldampSURE,koreanReconCVPR2019} with performance approaching that of supervised methods.

\subsection{Unsupervised training with rank deficient $\mathcal A$}
The unsupervised training of deep networks for image recovery is significantly more challenging than the denoising setting when $\mathcal A$ is rank deficient. In particular, the measurement model only acquires partial information about the images, in addition to the additive noise. The Deep Image Prior (DIP)~\cite{dip2018} approach exploits the structural bias of CNNs towards natural images; they optimize the network parameters such that the loss $\| \bs A_{\bs s}\bs{\widehat \rho} - 	\bs y_{\bs s} \|_2^2$ is minimized. Since this approach requires the network to be trained for each image, the computational complexity is high during inference. In addition, the quality of the recovered images is often inferior to the ones obtained from supervised training. Another challenge is the need for manual early stopping of the algorithm to minimize the over-fitting to noise. 

One may use the generalization of a DIP approach for an ensemble of images, which we term as measurement domain (k-space in MRI) MSE loss:
\begin{equation}
	\label{kmse}
	\text{K-MSE}= \sum_{i=1}^{N_{\rm train}} \| \mathcal A_{{\bs s}_i} \rec_i -  \bs{y_s}_i \|_2^2.
\end{equation}
This K-MSE  approach is prone to overfitting the reconstructed image to the noisy measurements~\cite{ssdu}.  The SSDU approach \cite{ssdu} was introduced to minimize this over-fitting ~\cite{ssdu}. SSDU, which may be viewed as an extension of blind-spot methods (e.g., \cite{noise2void}), suggests partitioning the available k-space into two disjoint groups for the data-consistency step and loss function estimation, respectively. The partitioning of k-space, along with the use of different sampling masks for different images, is observed to provide improved results when compared to~\eqref{kmse}. 

The projected GSURE approach was used to train model-based deep learning algorithms in the unsupervised setting in \cite{ldampSURE}. Specifically, the GSURE loss is an unbiased estimate of the projected MSE
\begin{equation}
	\label{pmse}
	{\rm MSE}_{\bs s} = \mathbb E_{{\boldsymbol \rho}\sim \mathcal M} \left\|  \mbf P_{\bs s}\left(\rec - \bs \rho\right)\right\|^2,
\end{equation}
where $\mbf P_{\bs s}=\A^H (\A \A^H) ^{-1} \A $ is the projection operator to the range space of $\A^H$ and $\mathcal M$ is the set of images. The authors of \cite{ldampSURE} noted that $ {\rm MSE}_s$ is a poor approximation of ${\rm MSE}$ in the highly undersampled setting. Hence, instead of directly using GSURE, the LDAMP algorithm instead relies on layer-by-layer training of deep learned denoisers \cite{ldampSURE} using the SURE loss discussed in Section \ref{denoiser}. This scheme implicitly assumes that the alias artifacts at each iteration are Gaussian distributed, which is not a realistic assumption. In addition, the  end-to-end training of model-based algorithms \cite{modl,admmnet,adler2018learned,hammernik} often offers improved performance compared to the above layer-by-layer strategies. 

\section{Ensemble SURE (ENSURE) framework}
The training of the deep network $f_{\Phi}$ in \eqref{recon} using only the measurements $\bs y_s$ in \eqref{fwd} is challenging when the sampling operator $\mathcal A_{\bs s}$ is rank deficient. More specifically, $\bs y_s$ carries only partial information about the images $\bs \rho$, in addition to being noisy. To improve the training in the setting without fully sampled images, we consider the sampling of each image using a different sampling pattern. In the MRI context, we assume the k-space sampling mask $\bs s$ to be a random vector drawn from the distribution $S$. We note that this acquisition scheme is realistic and can be implemented in many applications. For instance, it may be difficult to acquire a specific image in a fully sampled fashion due to time constraints. However, one could use a different undersampling mask for each image $\bs \rho \sim \mathcal M$. The main theoretical contribution of this work is that the expectation of the weighted SURE metrics over the sampling patterns is equal to the true MSE, which offers a better metric than GSURE, which approximates the projected MSE.

In the next section, we consider the expectation of the projected MSE over the sampling patterns, which involve the ground truth images. We show that averaging the projected MSE values will not approximate the true MSE, but a weighted version of the MSE. We thus define a projected and weighted error, which when averaged over the sampling patterns will approximate the true MSE. In Section \ref{ensuresection}, we introduce the ENSURE metric, which evaluates the projected and weighted error metric without using the ground truth images; we show that the expectation of ENSURE yields the true MSE. 
\subsection{Averaging the projected error over sampling patterns}
 We consider a specific image $\rho$ and consider the projected error $\|\mbf P_{\bs s}\rec -\mbf P_{\bs s} \bs{\rho}\| ^2$. In the noiseless setting, we  compute $\mbf P_{\bs s} \bs{\rho}$, which depends on the sampling pattern $\bs s$. In a noisy setting, this measure cannot be directly computed. We now compute the expectation of the projected errors over an ensemble of measurement operators: 
\begin{equation}
	\label{qmse}
	\mathcal Q(\bs u) = \mathbb{E}_{\bs u}\left[\mathbb{E}_{\bs s \sim \mathcal S}\left[ \|\mbf P_{\bs s}\left(\rec -\bs{\rho}\right)\| ^2 \right]\right].
\end{equation}
The following result shows that the above measure only gives a weighted version of the true MSE in \eqref{mse}. 
\begin{lemma}
	\label{lemma1}
The expected loss $\mathcal Q$ in \eqref{qmse} is equal to the weighted MSE:
	\begin{equation}
		\label{wmse}
		\mathcal Q = \mathbb E_{\bs{u}} \left[ \| \mbf W \left( \rec - \bs{\rho} \right)\| ^2 \right], 
	\end{equation}
	where $\mathbf W$ is a weighting matrix that is dependent on the set of measurement operators.
\end{lemma}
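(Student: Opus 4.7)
The plan is to reduce the nested expectation to a single quadratic form in $\rec - \bs\rho$. Starting from \eqref{qmse}, I would first exploit the fact that $\mbf P_{\bs s} = \A^H(\A\A^H)^{-1}\A$ is an orthogonal projector, hence Hermitian and idempotent with $\mbf P_{\bs s}^H\mbf P_{\bs s} = \mbf P_{\bs s}$. This lets me rewrite the squared norm as a Hermitian form:
\begin{equation*}
\|\mbf P_{\bs s}(\rec - \bs\rho)\|^2 = (\rec - \bs\rho)^H \mbf P_{\bs s}(\rec - \bs\rho).
\end{equation*}

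Next I would take the inner expectation over $\bs s \sim \mathcal S$. In the nested notation of \eqref{qmse}, the inner expectation treats $\rec - \bs\rho$ as fixed (it is a function of $\bs u$, which is the variable of the outer expectation), so that
\begin{equation*}
\mathbb{E}_{\bs s \sim \mathcal S}\bigl[(\rec - \bs\rho)^H \mbf P_{\bs s}(\rec - \bs\rho)\bigr] = (\rec - \bs\rho)^H \mbf K (\rec - \bs\rho),
\end{equation*}
where $\mbf K := \mathbb{E}_{\bs s \sim \mathcal S}[\mbf P_{\bs s}]$. The matrix $\mbf K$ is a convex combination of orthogonal projectors, so it is Hermitian positive semi-definite with spectrum contained in $[0,1]$. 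It therefore admits a unique positive semi-definite square root, which I would take as the weighting matrix $\mbf W := \mbf K^{1/2}$. The identity $\bs x^H \mbf K \bs x = \|\mbf W \bs x\|^2$, applied to $\bs x = \rec - \bs\rho$ and followed by the outer expectation over $\bs u$, yields \eqref{wmse}. By construction $\mbf W$ depends only on the distribution $\mathcal S$ of the measurement operators, as claimed.

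The step to be careful about — and where essentially all of the conceptual content of the lemma lives — is pulling $\mathbb{E}_{\bs s}$ past the $(\rec - \bs\rho)$ factors, since $\rec = f_\Phi(\A^H \bs y_{\bs s})$ is in general also a function of $\bs s$. The lemma should be read as licensing this by treating $\rec$ as a fixed reconstruction inside the inner $\bs s$-average, so that $\mbf W$ is a purely geometric quantity determined by $\mathcal S$ and not by $f_\Phi$ or $\bs\rho$. Once this convention is fixed, the remaining manipulation is the short chain above, and the takeaway is the negative one that motivates \S III: ensemble-averaging the projected MSE does not recover the true $\mathrm{MSE}$ of \eqref{mse}, but only a $\mbf W$-weighted version of it.
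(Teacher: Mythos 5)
Your proposal is correct and follows essentially the same route as the paper's proof: rewrite the squared norm as a quadratic form using the idempotence and symmetry of $\mbf P_{\bs s}$, average the projectors over $\bs s$ to get $\mbf Q = \Es[\mbf P_{\bs s}]$, and take $\mbf W$ as its positive semi-definite square root. The subtlety you flag --- that $\rec$ depends on $\bs s$ through $\bs u$, so the interchange requires treating the error as fixed inside the inner average --- is exactly the step the paper acknowledges with its ``ignoring its dependence on $\bs s$'' caveat and the $\approx$ sign in its derivation; you are, if anything, more explicit about it.
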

\begin{proof}
Denoting the error $\bs e = \rec -\bs{\rho}$ and ignoring its dependence on $\bs s$, we obtain
\begin{subequations}
	\label{proof1}
\begin{align}
	\mathcal Q	&=	\mathbb{E}_{\bs u}\left[\mathbb{E}_{\bs s }\left[ \|\mbf P_{\bs s} \bs e\|^2 \right] \right] \\
%	&=	\mathbb{E}_{\bs u_s}\left[\mathbb{E}_{\bs s }\left[ \bs e^T\mbf P_{\bs s}^T\mbf P_{\bs s} \bs e\right]\right]\\
%	&=	\mathbb{E}_{\bs u_s}\left[\mathbb{E}_{\bs s }\left[ \bs e^T\mbf P_{\bs s} \bs e \right] \right]\\
	&\approx	\mathbb E_{\bs u}  \left[\bs e^T\Es \left[ \mbf P_{\bs s}\right] \bs e \right] = \mathbb E_{\bs u}  \left[\bs e^T\mbf Q \bs e \right].
\end{align}
We used the symmetry of the projection operators and $\mbf P^2=\mbf P$ in the above step. Here, $\mbf Q$ is the ensemble of projection operator. We denote the matrix square root of $\mbf Q$ by $\mbf W$ (i.e., $\mbf W^2 = \mbf Q$) to obtain 
\begin{align}
\mathcal Q	&=	\Eus \left[\bs e^T\mbf W^T\mbf W \bs e\right]\\
&=	\Eus \left[\| \mbf W \left( \rec -\bs{\rho} \right) \|_2^2 \right], 
\end{align}
\end{subequations}
\end{proof}

Note that $\mathbf Q$ corresponds to the average of the projection operators. Depending on the sampling operators, the weighting by $\mathbf W$ amounts to weighting some aspects of the images more heavily than other features. 

We will now illustrate the above result in the special case of single-channel MRI when $\A=\mbf S\mathbf F\in \mathbb C^{n\times m}$, where $\mathbf F \in \mathbb C^{m\times m}$ is the Fourier matrix and $\mbf S \in \mathbb R^{n\times m}; n<m$ is the sampling matrix. The projection $\mathbf{P_s}$ gets simplified as $\mbf{P_s}= \A^H (\A \A^H)^{-1} \A
= \mathbf F^H \mbf M_{\bs s} \mathbf F$. Here, $\mbf M_{\bs s} =\mbf S^T\mbf S \in \mathbb C^{m\times m}$ is a diagonal matrix. The diagonal entries are one if the specific sample is acquired and zero otherwise. 

We now consider the diagonal entries to be drawn from a Bernoulli distribution with a probability that depends on the spatial frequency location. Note that variable-density masks are routinely used in CS settings. We thus obtain $\mbf Q=\mbf F^H  {\rm diag}(\bs d) \mbf F$, which translates to 
\begin{equation}\label{key}
\mathbf W = \mathbf F^H~{\rm diag}(\sqrt{\bs d}) \mathbf F
\end{equation}

The weighted loss $\mathcal Q$ is equivalent to the MSE if the sampling density is uniform. However, most sampling operators rely on variable-density masks with higher density in the k-space center. In this case, the training of the networks using $\mathcal Q$ will translate to reduced weighting for higher Fourier samples, resulting in reduced high-frequency details. 

\subsection{ENSURE: Unbiased Estimate of MSE}
\label{ensuresection}
%Here, $\bs \rho_{{\rm LS},s} = \mathcal F^H\mbf S^H \bs y_{\bs s}$. In this case, the data term in \eqref{ensure} simplifies to 
%\begin{equation}\label{singleChn}
%\Es  \Eus  \left\|\mbf W^{-1}\left(\A \rec - \bs y_s\right)\right\| ^2_2 
%\end{equation} 

%\begin{equation}
%	\label{pwmse}
%	\mathcal Q  = \mathbb{E}_{\bs u_s}\left[ \|\mbf W \left( f_{\Phi} (\bs{u}) - \bs{\rho} \right)\|^2 \right]
%\end{equation}
%If the sampling distribution $S$ is chosen appropriately, one can guarantee that $\mbf W$ is a full-rank matrix with high probability. 
%We term $\mathbf W$ as the ensemble projector. If the set $\bs S$ is chosen appropriately, $\mathbf W$ can be a full-rank mapping. We will make this assumption, even though the proposed framework can also be extended to the case with rank-deficient $\mathbf W$. 
%Depending on the probability distribution of the operators $\bs S$, some subspace components may be weighted more than others. 
To compensate for this weighting, we consider the weighted version of the projected error and compute the expectation over the set of sampling patterns $\mathcal S$ and the images, or equivalently $\bs u$:
\begin{equation}
\label{wpmse}
\mathcal L = \mathbb E_{\bs u} \left [\mathbb E_{\bs s \sim \mathcal S} \left[  \|\mbf R_{\bs s} (\rec -\bs\rho)\|_2^2 \right] \right], 
\end{equation}
where $\mbf R_s = \mbf W^{-1}\mbf P_{\bs s}$ is the weighted projection operator. We note that $\mathbf R_{\bs s} \bs \rho$ still lives in the range space of $\mathcal A$. Using a similar argument in Lemma 1, we can show that $\mathcal L = {\rm MSE}$.

We note that the loss function \eqref{wpmse} depends on the fully sampled ground truth images $\bs \rho_i; i=1,\ldots N$. Hence, it is impossible to directly compute it in an unsupervised setting. We hence use the SURE approach to evaluate it without requiring the ground truth images. We expand \eqref{wpmse} as 
\begin{equation}
    \label{expanded}
	\begin{split}
		\mathcal{L}= \Eus \Es \left[ \|\mbf{ R_s} \rec \|_2^2 \right] + \Eus \Es \left[ \|\mbf{ R_s}\bs\rho\|_2^2 \right]  \\
		- 2\Eus \Es\left[ \rec^T \underbrace{\mbf{ R_s}^T\mbf{ R_s}}_{\mathbf D} \bs\rho \right]			
	\end{split}	
\end{equation}
We note that the estimate $\widehat{\bs \rho} = f_{\Phi}(\bs u)$ in our setting is provided by a deep network $f$, whose weights are denoted by $\Phi$ and whose input is 
\begin{equation}
\label{ueq}
 \bs u = \mathcal A_{\bs s}^H \mathbf C^{-1}\bs y    
\end{equation}
Our goal is to train the network $f_{\Phi}$ or equivalently minimize the above expression with respect to $\widehat{\bs \rho}$.

We note that the first term in \eqref{expanded} is independent of $\rho$ and can be computed. The second term is a constant that does not depend on  $\widehat{\bs \rho}$.  The third term, which includes both $\bs \rho$ and $\widehat{\bs \rho}$. We show in the Appendix that this term can be computed using SURE as 
\begin{equation}
	\label{simplified}
	\begin{split}
		\Eus \Es\left[ \rec^T \mbf{ R_s}^T\mbf{ R_s} \bs\rho \right]	= \Eus \Es \left[ \rec^T\mbf D\bs \rho_{\rm LS} - \nabla_{\bs{u}} \cdot \mbf D \rec \right]
	\end{split}	
\end{equation}
Here, $\bs\rho_{\rm LS}$ is the least-squares solution specified by 
\begin{equation}\label{ls}
\bs\rho_{\rm LS}=\left(\mathcal A_s^H \mathbf C^{-1}\mathcal A_s\right)^{\dagger} \mathcal A_s ^H \mathbf C^{-1} \bs y_{\bs s} .
\end{equation}
 The term $\nabla_{\bs{u}}\cdot \mathbf D f_{\Phi}(\bs u)$ represents the divergence of the network $f_\Phi$ with respect to its input $\bs{u}$.

Combining \eqref{expanded} and \eqref{simplified}, we obtain 
\begin{eqnarray}\nonumber
    	\mathcal{L}&=& \Eus \Es \left[ \|\mbf{ R_s} \rec \|_2^2 + \|\mbf{ R_s}\bs\rho\|_2^2 
		- 2 (\mbf{ R_s}\rec)^T\mbf{ R_s}\bs \rho_{\rm LS}\right]  \\\label{expanded1}
	&&\qquad  + 2~~\Eus \Es\left[\nabla_{\bs{u}} \cdot (\mbf D \rec) \right]\\
		\label{final}
		&=& \Eus \Es \left[ \|\mbf{ R_s} (\rec -\bs \rho_{\rm LS} )  \|_2^2 +  2\nabla_{\bs{u}} \cdot (\mbf D \rec) \right] + \kappa
\end{eqnarray}
We combined the first and third terms in \eqref{expanded} and completed the square to obtain the ENSURE loss as \eqref{final}.

\vspace{5em}
Here, 
\begin{equation}
\kappa= \Eus \Es \left[ \|\mbf{ R_s}\bs\rho\|_2^2 - \|\mbf{ R_s}\rho_{\rm{LS}} \|_2^2\right]
\end{equation}
is a constant that is independent of the network.

In practice, we approximate the expectation by a summation 
\begin{equation}
\label{ensure}
\text{ENSURE}= \frac{1}{N}\sum_{i=1}^{N} \left(\underbrace{\|\mbf{ R_{\bs s_i}} ( \rec_i -\bs \rho_{\rm{LS},i})  \|_2^2}_{\rm data term} + 2\underbrace{\nabla_{\bs{u_i}} \cdot  \left(\mathbf D f_\Phi(\bs{u_i})\right)}_{\rm divergence}\right) 
\end{equation}
where we ignore the constant $\kappa$. Here, $\bs u_i$ are the zero-filled reconstructions (see Section  \eqref{singlechannel} for details) of the $i^{\rm th}$ image. We assume that each image $\rho_i$ is sampled by the corresponding sampling pattern $\bs s_i$.

The loss function in \eqref{ensure} is an unbiased estimate for the true MSE, specified by \eqref{mse}, up to a constant. Since this loss function does not require the fully sampled reference images, it can be used to train a deep learning image reconstruction algorithm in an unsupervised fashion.  The data term in \eqref{ensure} involves the average of the weighted projected losses evaluated over the $N$ training examples. The divergence term is computed over all the data and serves as a regularization term to account for the impact of noise. 

\begin{figure}
	\centering
	\subfloat[Data-term in \eqref{ensure}]{	
		\includegraphics[width=.9\linewidth]{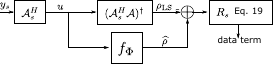}   
	}
	
	\subfloat[Divergence in \eqref{ensure}]{
		\includegraphics[width=.7\linewidth]{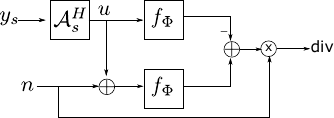}   
	}
	
	\caption{Visual representation of the computation of the data and divergence terms in the proposed ENSURE estimate in~\eqref{ensure}. Here, $\oplus$ and $\otimes$ represent the addition and inner product, respectively. }
	\label{fig:sketch}
\end{figure}

We now focus on some special cases in the context of MRI.
\subsubsection{Single-channel setting with $\mathbf C = \sigma^2 \mathbf I$}
\label{singlechannel}
 
 In many cases, we assume that $\mathbf C = \sigma^2 \mathbf I$, when we obtain 
\begin{equation}\label{ls}
\bs\rho_{LS}=\left(\mathcal A_s^H \mathcal A_s\right)^{\dagger} \mathcal A_s ^H \bs y_{\bs s} =  \mathcal A_s ^H \left(\mathcal A_s \mathcal A_s^H\right)^{-1} \bs y_{\bs s}.
\end{equation}
In the single-channel setting, $\mathcal A_s \mathcal A_s^H = \mathcal I_{n\times n}$ because of the orthogonality of Fourier exponentials. Thus $\rho_{LS}$ simplifies to 
\begin{equation}\label{v}
\bs{v}=\A ^H \bs{y_s},    
\end{equation}
which is often termed as the zero-filled reconstruction. In this case, the data term involves the comparison of the projection of the reconstructed image and the zero-filled reconstructions. 
\subsubsection{Multichannel case with $\mathbf C = \sigma^2 \mathbf I$: }
  
  In the multichannel setting,  $\mathcal A_s \mathcal A_s^H \neq \mathcal I$, and hence the least-square solution is not equal to the zero-filled reconstruction $\bs v$.  Many deep learning  algorithms use the zero-filled image $\bs v$ as the input to the deep networks rather than its scaled version $\bs u = \sigma^2 \bs v$; we now express the divergence term in \eqref{ensure} in terms of $\bs v$. We consider a network $g_{\phi}$, whose input is $\bs v$ rather than $\bs u$: $g_{\phi}(\bs v) = f_{\phi}(\bs u)$. We thus have 
 \begin{equation}
     \nabla_{\bs{u_i}} \cdot  f_\Phi(\bs{u_i}) = \nabla_{\bs{u_i}} \cdot  g_\Phi(\bs{v_i}) = \nabla_{\bs{v_i}} \cdot  g_\Phi(\bs{v_i}) ~\underbrace{d\bs v_i/d\bs u_i}_{\sigma^2}
 \end{equation}
In this case, the ENSURE loss in \eqref{ensure} can be rewritten as 
\begin{equation}
\label{ensurev}
\text{ENSURE}= \frac{1}{N}\sum_{i=1}^{N} \left(\underbrace{\|\mbf{ R_{\bs s_i}} ( \rec_i -\bs \rho_{\rm{LS},i})  \|_2^2}_{\rm data term} + 2\sigma^2\underbrace{\nabla_{\bs{v_i}} \cdot  g_\Phi(\bs{v_i})}_{\rm divergence}\right) 
\end{equation}
Here, we ignored the constant term $\kappa$. We note that this form is similar to \eqref{sure}, especially when $\mathbf R_s= \mathbf I$.\\\\
We note that the divergence term in \eqref{v} is scaled by $\sigma^2$. The divergence serves as a regularization term on the deep network, making the learned network more robust to input noise/perturbations. The balance between the data consistency term and the divergence term in \eqref{ensurev} is controlled by $2\sigma^2$; more regularization is applied when the input is more noisy. When $\sigma\rightarrow 0$, the divergence term disappears, when the ENSURE based approach simplifies to a loss-function that only involves k-space losses. In this work, we do not consider any additional tuning parameter to the divergence term; our loss function is specified by \eqref{ensurev}.

\subsection{Implementation details}
We compute the least-squares solution as
\begin{eqnarray}\label{cgsense}
\rho_{{\rm LS}} &=& \argmin_{\bs \rho} \|\mathcal A_s \bs \rho - \bs y_s\|^2 + \lambda \|\bs \rho\|^2\\
&=& (\mathcal A_s^H\mathcal A_s + \lambda \mathcal I)^{-1}\mathcal A_s^H \bs y_s
\end{eqnarray} 
with $\lambda \rightarrow 0$. When $\lambda=0$, the above equation can be shown to be equivalent to $\mathcal A_s ^H \left(\mathcal A_s \mathcal A_s^H\right)^{-1} \bs y_{\bs s}$. In the parallel MRI setting, the least-square solution $\rho_{{\rm LS},s} $ in \eqref{ls} is often called the SENSE \cite{Pruessmann1999} solution. We solve \eqref{cgsense} using the conjugate gradients algorithm.

We compute the projection operator $\mbf P_{\bs s}$ as $\mbf P_{\bs s}= \left(\mathcal A_s^H \mathcal A_s\right)^{\dagger} \left(\mathcal A_s ^H \mathcal A_s\right)$ in \eqref{qmse} and \eqref{ensure}. We note that the projection can be obtained by replacing $\bs y_s$ in \eqref{cgsense} with $\bs y_s=\mathcal A_s \bs \rho$. We thus have:

\begin{eqnarray}
\mbf P_{\bs s}\bs \rho = \argmin_{\bs \zeta} \|\mathcal A_s \bs \zeta - \mathcal A_s{\bs  \rho}\|^2 + \lambda \|\bs \zeta\|^2
\end{eqnarray}
with $\lambda\rightarrow 0$; in the MRI setting, this corresponds to the SENSE recovery from $\mathcal A_s \bs e$. When the weighting is also included, we compute 
\begin{eqnarray}\label{projimplementation}
\mbf R_{\bs s}\bs e = \argmin_{\bs \zeta} \|\mathbf D_s\mathcal A_s\left(\bs \zeta - \bs e\right)\|^2 + \lambda \|\bs \zeta\|^2,
\end{eqnarray}
 where $\mathbf D_s =  {\rm diag}\left(\bs d_s^{-\frac{1}{2}}\right)$, where $\bs d_s$ corresponds to the density at the sampling locations. 
 
 Combining \eqref{cgsense} and \eqref{projimplementation}, the computational structure of the data term is shown in Fig.~\ref{fig:sketch}(a). In particular, we compare the recovered images with their CG-SENSE reconstructions specified by $\rho_{{\rm LS}} $. Note that the CG-SENSE solutions may have residual aliasing components. To make the data consistency term insensitive to these errors, the error $\bs e=\widehat{\bs \rho}-\bs \rho_{\rm LS}$ is projected to the range space of $\mathcal A_s$. An additional weighting is used to compensate for the non-uniform density of the sampling patterns in k-space, which is implemented as in \eqref{projimplementation}.

The divergence term in \eqref{ensure} is computed using Monte-Carlo simulations \cite{mcsure} as in Fig.~\ref{fig:sketch}(b). Specifically, noise perturbations are added to the input to the network, and the corresponding perturbations in the output of the network are estimated. The divergence term is approximated as 
\begin{equation*}
	\text{div}_u(f_\Phi(u)) = \lim_{\epsilon \to 0} ~\frac{1}{\epsilon}~\mathbb E_n \left[ \bs n^T\left(f_\Phi(\bs u_s+\epsilon~ \bs n) - f_\Phi(\bs u_s)\right) \right ].
\end{equation*}
Here $n$ is the standard Gaussian. As in \cite{mcsure}, we observe that the expectation can be approximated by a Monte-Carlo approach involving different noise realizations. In practice, one noise realization per image per epoch is sufficient to obtain a good approximation as observed in \cite{mcsure}. Note that the use of the data term alone will result in over-fitting, similar to observations in DIP methods as the number of epochs increases. The divergence term may be viewed as a network regularization, which serves to minimize noise amplification. 

In this work, we estimate the $\mathbf W$ matrix by evaluating multiple sampling matrices and by computing their average. 

In this work, we have assumed the Cartesian sampling mask, where each entry is a Bernoulli random variable whose probability at each spatial frequency location to be given by a Gaussian function; the probability of the samples is higher in the k-space center and lower at higher frequencies. We do not fully sample the k-space center. Algorithm~I summarizes the steps of the proposed ENSURE loss for performing unsupervised training.

\begin{algorithm}
	\caption{ENSURE loss function calculation.} 
 
	\begin{algorithmic}[1]
		\renewcommand{\algorithmicrequire}{\textbf{Input:}}
		\renewcommand{\algorithmicensure}{\textbf{Output:}}
		\REQUIRE $\bs y_{\bs s}$, $\A$
		\ENSURE  ENSURE loss		
		\STATE Estimate the zero-filled reconstruction $u=\A^H \bs y_{\bs s}$.
  \STATE Estimate the least square solution $\rho_{LS}$ by solving Eq.~\eqref{ls}.
  \STATE Estimate the network prediction $\rec$ using $f_{\phi}$.
  \STATE find the error as $e=\rec-\rho_{LS}$.
  \STATE Solve Eq.~\eqref{projimplementation} to apply weighted projected $\mathcal R_s$ on $e$. 
  \STATE Take the L2-norm of $\mathcal R_s e$ to get the data-term.
  \STATE Find divergence term of Eq.~\eqref{ensurev} as in Fig.\ref{fig:sketch}~(b).
  \STATE Estimate the ENSURE loss as the average of data-term and scaled divergence term as in Eq.~\eqref{ensurev}.  
  \end{algorithmic} 

\end{algorithm}

\subsection{Deep network architectures}

We used the model-based deep learning architecture, as shown in Fig.~\ref{fig:network} for the initial comparisons. The CNN had five layers, each with $3\times3$ convolution, batch normalization, and ReLU non-linearity. Fig.~\ref{fig:network} also shows the number of feature maps on top of each layer.  The network had shared parameters in three unrolling steps. We trained the network for a total of 50 epochs with a fixed learning rate of $10^{-3}$. The data-consistency (DC) step utilized the complex data,  whereas the CNN used the real and imaginary components of the complex data as channels.

\section{Results}

\begin{figure} 
	\centering
	\includegraphics[width=.99\linewidth]{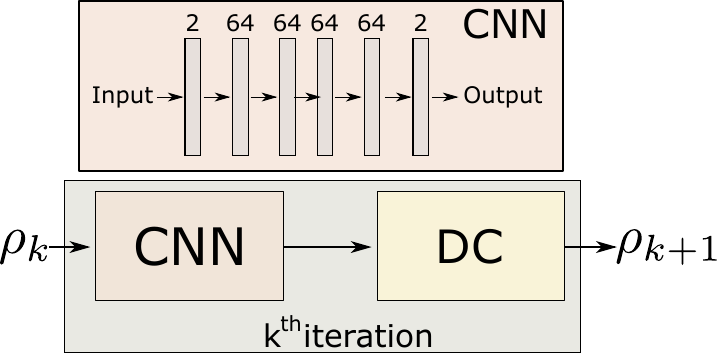}   
	\caption{The specific network architecture used in the experiments. The CNN is a five-layer model that takes complex data as input. Each of the middle layers has 64 feature maps. It concatenates real and imaginary components as channels in the first layer. Similarly, the last layer output is converted to the complex type before it enters the DC step.}
	\label{fig:network}
\end{figure}

%\begin{figure} 
%	\input{fig_gsure_ens.tex}
%	\caption{This example demonstrates the benefit of sampling an image with multiplehttps://www.overleaf.com/project/6050d11f7924ad1131dbccea measurement operators in the one-dimensional sampling at 2X acceleration factor. }
%	\label{fig:gsure_ens}
%\end{figure}	

\begin{table} \centering
	\caption{Quantitative comparison ofpeak signal-to-noise ratio (PSNR)~(dB) and structural similarity index (SSIM) values. Each value shows mean~$\pm$~ std.   }
	\label{tab:results}
		\begin{tabular}{lccc}   \toprule
                   	   &\multicolumn{3}{c}{Peak Signal to Noise Ratio (PSNR)} \\ \midrule
         acceleration  &        4x        &4x, $\sigma=0.03$&   6x            \\ \midrule
		%End-to-End Sup & $37.71 \pm0.97 $ & $ 34.3\pm0.72 $ & $35.39 \pm 0.92$\\  
		E-to-E-Sup & $37.71 \pm0.97 $ & $ 35.34\pm0.70 $ & $35.39 \pm 0.92$\\  
	    L-by-L-Sup  & $36.64 \pm1.07$  & $ 35.29 \pm1.29$ & $35.26 \pm0.99$\\ 
		\midrule
		GSURE          & $ 36.80\pm1.04 $ & $ 31.10\pm1.40 $ & $33.68 \pm 1.04$\\
%		LDAMP-SURE     & $ 31.89\pm1.49 $ & $ 30.36\pm1.64 $ & $30.02 \pm 1.40$\\
%		K-MSE-FixA     & $ 35.13\pm1.14 $ & $ 31.60\pm0.66 $ & $32.85 \pm 1.07$\\
    	K-MSE          & $ 35.29\pm1.12 $ & $ 31.67\pm0.76 $ & $33.16 \pm 1.04$\\
		SSDU           & $ 35.46\pm1.03 $ & $ 33.48\pm0.71 $ & $33.92 \pm 1.15$\\
		LDAMP-SURE     & $ 34.97\pm1.23 $ & $ 33.56\pm1.39 $ & $32.87 \pm 1.43$\\
%    	ENSURE-NoWt    & $ 37.32\pm0.95 $ & $ 31.93\pm1.42 $ & $33.60 \pm 1.07$\\
		ENSURE         & $ 37.36\pm0.92 $ & $ 34.13\pm0.90 $ & $34.98 \pm 0.93$\\ \midrule
                 	   &\multicolumn{3}{c}{Structural Similarity Index (SSIM)} \\ \midrule
		E-to-E-Sup & $ 0.97\pm0.00 $ & $ 0.95\pm0.01 $ & $ 0.94\pm0.01 $ \\ %\midrule
   	    L-by-L- Sup   & $ 0.95\pm0.00 $ & $ 0.95\pm0.00 $ & $ 0.94\pm0.00 $ \\ \midrule
%		K-MSE-FixA  & $ 0.93\pm0.01 $ & $ 0.85\pm0.02 $ & $ 0.92\pm0.01 $ \\
		GSURE          & $ 0.93\pm0.01 $ & $ 0.91\pm0.01 $ & $ 0.92\pm0.02 $ \\
		K-MSE          & $ 0.95\pm0.01 $ & $ 0.86\pm0.02 $ & $ 0.93\pm0.01 $ \\
		SSDU           & $ 0.93\pm0.02 $ & $ 0.88\pm0.04 $ & $ 0.92\pm0.01 $ \\
		LDAMP-SURE     & $ 0.96\pm0.00 $ & $ 0.94\pm0.01 $ & $ 0.93\pm0.01 $ \\
%		ENSURE-NoWt  & $ 0.96\pm0.01 $ & $ 0.92\pm0.01 $ & $ 0.93\pm0.01 $ \\
		ENSURE         & $ 0.97\pm0.00 $ & $ 0.93\pm0.01 $ & $ 0.94\pm0.01 $ \\ 
		\bottomrule		
	\end{tabular}
% !TeX root = tmi_new_version.tex
\end{table}

We consider publicly available~\cite{modl} parallel MRI brain data acquired using a 3-D~T2~CUBE sequence with Cartesian readouts using a $12$-channel head coil at the University of Iowa on a 3T GE MR750w scanner. The Institutional Review Board at the University of Iowa approved the data acquisition, and written consent was obtained from the subjects. The matrix dimensions were $256\times232\times 208$ with a $1$~mm isotropic resolution.  Fully sampled multi-channel brain images of nine volunteers were collected, out of which data from five subjects were used for training. The data from two subjects were used for testing and the remaining two for validation.

The validation data was used to terminate the training process. We utilized the ESPiRiT algorithm~\cite{espirit2014} to estimate the coil sensitivity maps.  

We also used a subset of the fast-MRI \cite{fastMRI} knee dataset, consisting of 100 subjects, to demonstrate the utility of ENSURE on multiple unrolled architectures. The data from 43 subjects were used for training, two for validation, and the rest for testing.

\subsection{Comparison with state-of-the-art methods}

We compare the proposed ENSURE approach with the following supervised and unsupervised methods. 

\noindent\textbf{Two supervised learning algorithms} E-to-E-Sup and L-by-L-Sup: The E-to-E-Sup approach refers to the end-to-end supervised training approach \cite{modl}. To make it consistent with the rest of the experiments, we use multiple sampling patterns during training. The L-by-L-Sup uses a similar acquisition and reconstruction architecture but relies on the supervised training of the CNNs in each layer. Both approaches use the loss function in \eqref{sup_mse} for training. We note that the number of free parameters in L-by-L-Sup is threefold higher than E-to-E-Sup, where the CNN parameters are shared across iterations.   

\noindent\textbf{GSURE-based unsupervised learning:} The GSURE~ loss, which is an unbiased estimate of projected MSE in~\eqref{pmse}, was originally introduced for regularization parameter selection.  We consider the direct use of GSURE to train deep image reconstruction algorithms. Here, the sampling operator is assumed to be the same for all training images as considered in \cite{ldampSURE}.

\noindent\textbf{Unsupervised learning with different sampling patterns:} We compare the proposed scheme against K-MSE, SSDU~\cite{ssdu}, and LDAMP-SURE~\cite{ldampSURE}. The measurement operators are randomly chosen for each image in all of these approaches, but are assumed to be fixed for each image; we do not vary them during iterations. The K-MSE~\eqref{kmse} loss function involves the square of the error between measured and predicted k-space. 
As  discussed previously, the K-MSE approach is vulnerable to over-fitting to noise. To minimize over-fitting, SSDU partitions the measured k-space samples into two disjoint groups: one group is used to reconstruct the images, while the second is used to evaluate the loss function. The 80-20 partition of k-space offered the best PSNR in our experimental setup. The LDAMP-SURE algorithm relies on layer-by-layer training. The five-layer CNN at each layer is independently trained as a denoiser using the SURE loss function.

We consider the recovery from 2-D random sampled multi-channel data in Table~\ref{tab:results}. The PSNR and SSIM~\cite{ssim} for two different acceleration factors of 4x and 6x are reported in the first and last columns. To study the impact of measurement noise on the algorithms, we also consider a setting with additional Gaussian noise of standard deviation~$\sigma=0.03$ added to the undersampled k-space data at 4x acceleration; these results are shown in the second column of Table~\ref{tab:results}. The top two rows consider the supervised setting, while the remaining rows denote the unsupervised methods discussed above. 

Fig.~\ref{fig:4x} visually compares the reconstruction quality of different unsupervised learning techniques with E-2-E-Sup learning at the 4x acceleration factor. In this low-noise setting, K-MSE offers nearly the same reconstruction quality as SSDU, which is also seen in Table \ref{tab:results}. The zoomed region shows that LDAMP-SURE, K-MSE, and SSDU have visible artifacts in the reconstructions compared to GSURE and ENSURE. The image quality of GSURE and ENSURE are comparable in this setting. 

Fig~\ref{fig:6x} shows reconstruction results at a higher acceleration factor of 6x. We note that the performance of the ENSURE approach is comparable to that of the supervised approach at this high acceleration rate. By contrast, we observe that GSURE, K-MSE, and LDAMP-SURE exhibit more noise-like artifacts along with blurring. A green rectangle in the zoomed images shows a hallucinated  feature in the LDAMP-SURE because of layer-by-layer training of individual networks instead of end-to-end training in the proposed ENSURE approach. The SSDU approach offers less-noisy reconstructions compared to these approaches. However, the PSNR is around 0.5 dB lower than ENSURE. An arrow in the zoomed cerebellum region points to a feature that is well preserved by ENSURE while blurred by SSDU.

\subsection{Performance in high-noise setting}

As described previously, the divergence serves as a regularization term on the deep network, making the learned network more robust to input noise/perturbations. In the low noise setting ($\sigma\rightarrow 0$), the weight for the divergence term in \eqref{ensurev} tends to zero. In this case, ENSURE simplifies to the DC term, which is a weighted version of the K-MSE loss. The difference of ENSURE with this simple strategy is higher at high noise settings, which might happen at low-field or FLAIR acquisitions.  To study the benefit of the proposed formulation in high noise settings, Gaussian noise of standard deviation $\sigma=0.03$ was added to the measured k-space data at 4x acceleration. 
The comparison of the methods is shown in Fig. \ref{fig:4xnoise}, which can also be seen in the second column of Table~\ref{tab:results}. In this setting, we observe that E-to-E-Sup and L-to-L-Sup have nearly the same mean PSNR values. We observe that the LDAMP-SURE approach performs better than the GSURE approach in this high noise setting, as reported in \cite{ldampSURE}. In particular, the projected MSE is a poor approximation for the MSE, which translates to poor reconstructions in the presence of noise. The improved performance of SSDU over KMSE shows that the partitioning strategy used in SSDU can reduce the over-fitting to noise. The ENSURE approach offers the best PSNR out of the unsupervised strategies and is comparable in PSNR and image quality to E-to-E-Sup. We observe that ENSURE is robust to increased noise variance when compared to other algorithms that do not use the divergence term. While GSURE also uses the divergence term, the data consistency term is very different from ENSURE, which causes an imbalance between the two terms; we attribute the lower performance of GSURE in this setting to the imbalance.

\begin{figure*} 
	\input{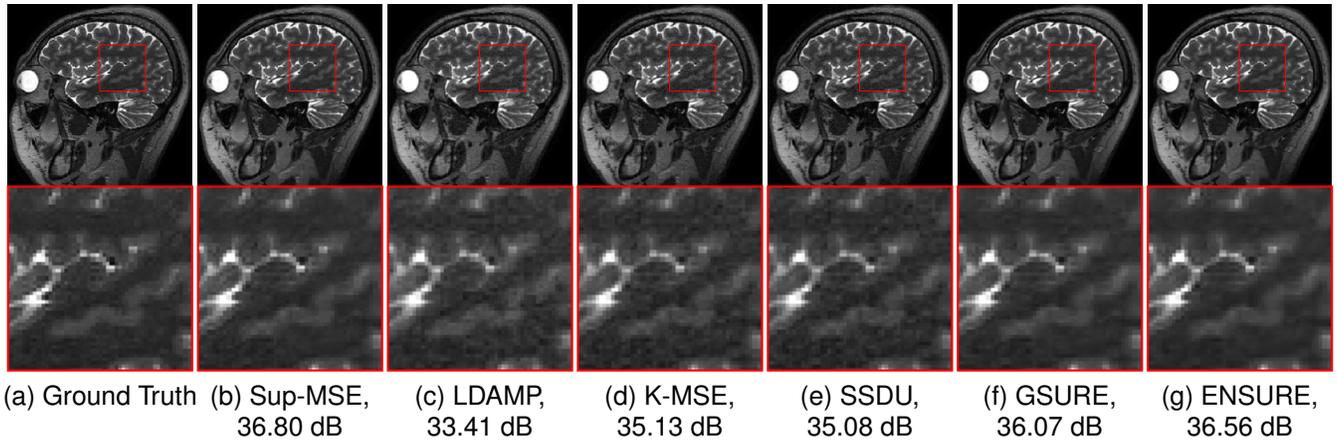}
	\caption{This figure compares reconstruction results at 4x acceleration on a test slice. We can observe from the zoomed region that the proposed ENSURE approach results in comparable reconstruction quality to that of supervised training with MSE (Sup-MSE). Here, LDAMP refers to the LDAMP-SURE algorithm. }
	\label{fig:4x}
\end{figure*}
\begin{figure*} 
	\input{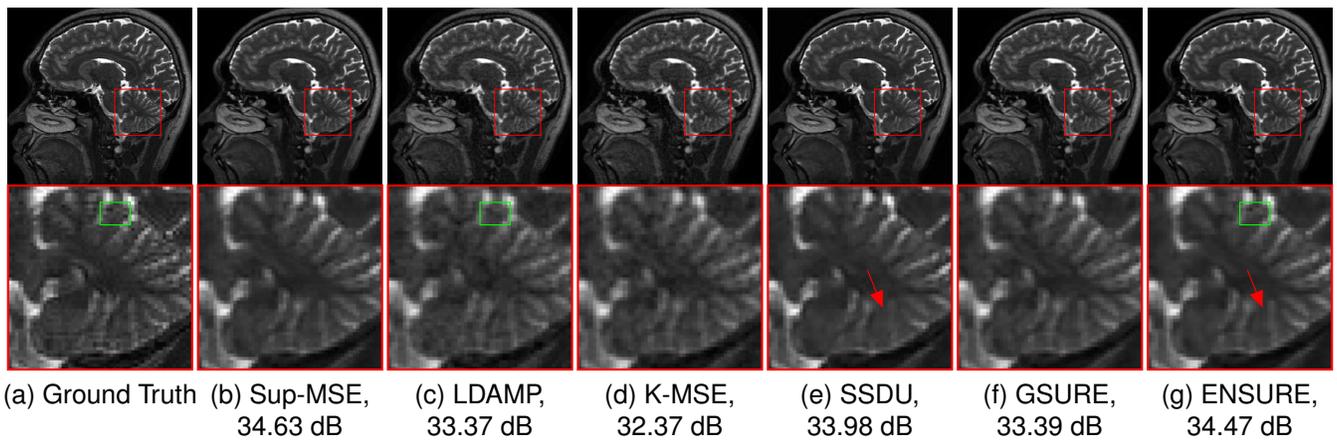}
	\caption{Reconstruction results at a higher acceleration factor of 6x. We observe that, even at this high acceleration, the proposed ENSURE approach results in similar quantitative and qualitative reconstruction quality to that of supervised MSE training. The arrow in the zoomed region points to a feature well captured using ENSURE as compared to SSDU. }
	\label{fig:6x}
\end{figure*}
\begin{figure*} 
	\input{fig_compare_2d_4x_noise_err.tex}
	\caption{Reconstruction results at 4x acceleration with added Gaussian noise of $\sigma=0.03$ in the measurements. The bottom row shows the corresponding error maps. The magnitude of error images is multiplied by four for visualization purposes. }
	\label{fig:4xnoise}
\end{figure*}

\subsection{Impact of different loss terms: ablation study}
\label{impact}
The ENSURE approach has two key differences from the GSURE framework: (a) the use of an ensemble of sampling patterns, and (b) the use of  additional weighting within the DC term. We now study the impact of these components on performance. 

\subsubsection{Need for using an ensemble of measurement operators}
We compare the GSURE framework, which uses a single sampling pattern for all the images, with the ENSURE formulation, which uses multiple sampling operators in Figure~\ref{fig:gsure_ens}. A one-dimensional Cartesian sampling at 2x acceleration with an additional Gaussian noise of $\sigma=0.02$ was used. We note that aliasing artifacts and blurring are visible in the GSURE approach. In contrast, the ENSURE approach offers comparable performance to the E-to-E-Sup-MSE. A major source of the improved performance is the use of the ensemble of measurement operators, which makes ENSURE closely approximate the true MSE. By contrast, the GSURE approach is only approximating the projected MSE, which is inferior in the undersampled setting, as discussed in \cite{ldampSURE}.

\begin{figure*} 
	\input{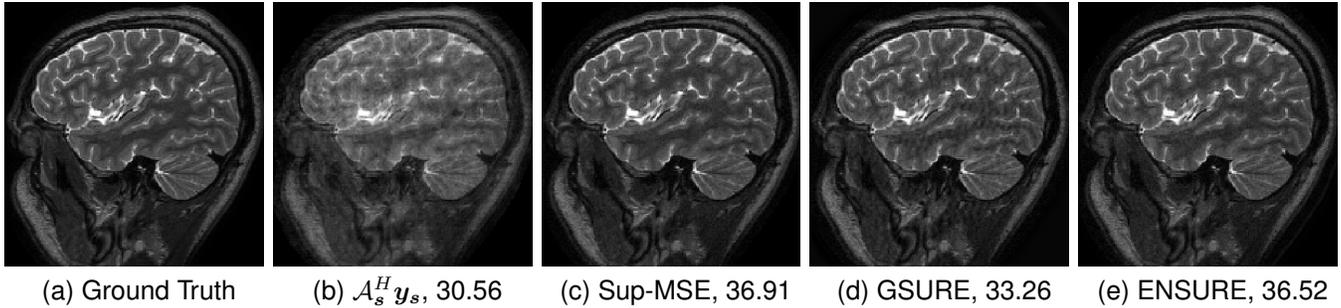}
	\caption{This example demonstrates the benefit of sampling an image with multiple measurement operators in the one-dimensional sampling case at a 2X acceleration factor with added Gaussian noise of $\sigma=0.02$. The numbers in the sub-captions show PSNR (dB) values.}
	\label{fig:gsure_ens}
\end{figure*}

\subsubsection{Impact of the weighting}

\begin{figure*} 
	\input{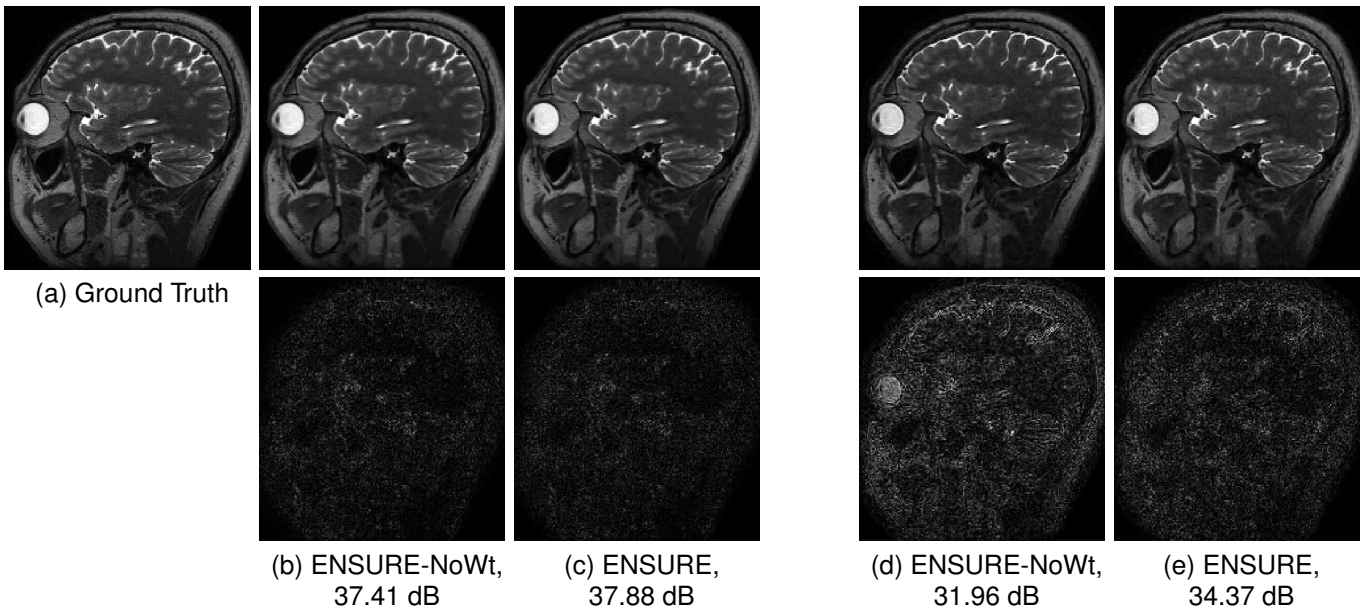}
	\caption{This figure shows the benefit of the proposed weighting strategy. It visually compares the reconstruction quality of ENSURE with and without the proposed weighting scheme. The magnitude of error images is multiplied by four for visualization purposes. (b) and (c) refers to the case when no simulated noise is added to the measurements. (d) and (e) refers to the experimental setup when we added Gaussian noise of $\sigma=0.03$ to the measurements. }
	\label{fig:weighting}
\end{figure*}

This experiment compares the impact of the proposed weighting strategy in~\eqref{wpmse} at 4x acceleration using a 2D random variable-density sampling pattern. 
We trained two networks with experimental setups that are similar except for the loss function. When we set the weights $\mbf W$ to identity in~\eqref{ensure}, it results in a network trained with the ENSURE loss without weighting. The reconstruction quality and error maps in Fig.~\ref{fig:weighting} demonstrate that the proposed weighting scheme improves the reconstruction quality significantly. 

\subsection{Applicability to other deep image recovery algorithms}
We demonstrate the applicability of the proposed loss function to other deep image recovery algorithms at 2x acceleration in Table \ref{fastMRIval}; the reconstructed images are not shown in this case because there are no substantial differences between them. Here, we use a subset of 100 datasets from fastMRI \cite{fastMRI}. The noise variance was estimated from one fully-sampled dataset, which was used for the rest of the datasets.  The first four columns in Table \ref{fastMRIval} correspond to different algorithms trained using fully sampled ground truth and ENSURE, respectively. A uniform density sampling pattern with the fully sampled center for coil sensitivity estimation, provided by fastMRI challenge~\cite{fastMRI}, was used in these experiments. In each column, we report the PSNR (top) and the SSIM (bottom) values. The comparisons show that all of the algorithms (ADMMnet, FISTAnet, and MoDL) offer somewhat similar performance on this dataset when trained with MSE. The results also show that ENSURE is capable of offering comparable reconstructions for these algorithms.  

We also study the impact of ENSURE under different sampling conditions in the last three columns on Table \ref{fastMRIval}. MoDL-MS refers to ENSURE implementation, where a different random undersampled mask is used at each epoch. We note that such an approach is not feasible in settings where only the undersampled data is available. This experiment is to determine the impact of mask shuffling on image quality. We note that the results in this setting are compared to MoDL without mask shuffling. This shows that the approximation of the expectation over the sampling pattern in \eqref{wpmse} is accurate, even when only one sampling pattern per image is considered (MoDL setting, without mask, shuffling). The last two columns, SENSE-VD and MoDL-VD, consider 2x variable-density sampling, using masks similar to the brain experiments. We note that the higher density in the center of k-space translates to improved performance. However, the gap between MSE and ENSURE remains almost the same, indicating that the specific choice of sampling patterns have less impact on the ENSURE approximation.

\section{Discussion}

We observe from Table~\ref{tab:results}, as well as the figures, that the proposed unsupervised learning using the ENSURE loss function offers PSNR and SSIM values that are comparable to those of E-to-E-Sup. The experiments in Section \ref{impact} show that the ensemble of measurement operators and the weighted loss function enabled the ENSURE loss to closely approximate the true MSE when fully sampled and noise-free images are not available. 

We note that the performance of the L-by-L-Sup approach, which has threefold more parameters, is only marginally worse than the E-to-E-Sup approach in all the settings. We note that the LDAMP-SURE approach, which relies on layer-by-layer training, is worse than ENSURE. This deterioration in performance may be attributed to the non-Gaussianity of the alias artifacts at each iteration, which is the key assumption used in LDAMP-SURE. We observe that the K-MSE scheme offers low performance, especially at high-noise and highly accelerated settings. This is caused by the over-fitting of the network to the noise within the measurements. The partitioning strategy used in SSDU is observed to offer improved performance over K-MSE, especially in the high-noise setting. Similarly, the performance improvement of ENSURE over GSURE is more significant in the high-noise and highly accelerated setting. We trained the networks for each setting independently. The experiments with additional synthetic noise involved both training and testing the networks at the high noise setting. We note that all model-based algorithms (e.g. SENSE that combines information from multiple channels and compressed sensing) simultaneously perform reconstruction and denoising; the reconstructed signal is not data-consistent in any of these cases. 

The ENSURE and GSURE techniques depend on the estimation of noise variance in the measured data. In the experiments in Table \ref{fastMRIval}, we estimate the noise variance from one fully sampled complex coil combined image. Since the image is complex, the noise will still be Gaussian distributed. We threshold the reconstructed image to identify the image support. The variance of the image in the background region is used as the noise variance.
 We estimate the noise variance for LDAMP-SURE \cite{koreanReconCVPR2019,ldampSURE} as suggested in~\cite{koreanReconCVPR2019}. 

We used the network architecture in Fig.~\ref{fig:network} for the comparisons of loss functions. However, any direct-inversion or model-based network architecture can use the proposed ENSURE loss function. Unlike LDAMP-SURE, our proposed method does not depend on the unrolling of any specific algorithm. 

The SSDU method~\cite{ssdu} suggests partitioning the measured k-space such that 60\% is used in the DC step and 40\% in the loss function estimation. With our dataset and network architecture, we experimented with several different partitions of the k-space, including the suggested 60-40 split, and found that an 80-20 division performed best.  

The main focus of this work is on how to train a network without fully sampled reference images. We note that there are several practical considerations, including dependence of the trained network on the specific field of view and contrasts, which are non-trivial issues. As shown in earlier work \cite{modl,modlmussels}, the training strategy may be modified to make the network relatively less sensitive to these changes. However, these problems are shared between supervised and unsupervised methods. To keep the paper focused, we do not address these aspects in this work. In this work, we train different networks for different undersampling ratios. As shown in \cite{modl,modlmussels}, the proposed work may be extended to make the trained network insensitive to the specific sampling ratio. However, we leave this potential extension for future work to keep the paper brief.  

In this work, we assume that the average weight matrix $\mathbf W$ is invertible. In the single channel setting, this will not be true if some Fourier samples are not included or acquired by any of the sampling patterns. In this case, it will be impossible to realize an unbiased estimate for the true MSE. One may come up with a projected MSE onto the observed samples. We leave this potential extension for future work, with the focus on keeping the paper brief.

We note that there is a gap between the supervised MSE training and the ENSURE results. We note that ENSURE is only an unbiased estimate for the MSE, which is an approximation. The approximation becomes exact only when the expectation is taken over the sampling patterns. In our implementation, we assume that each image is sampled with a specific sampling pattern that is fixed during the training process. We also note that the gap in performance of ENSURE and MSE is comparatively higher in the fastMRI setting than in our earlier experiments. We attribute this to the larger variability in image quality between the datasets in fastMRI (likely acquired on different scanners and coils) and the inaccuracy in the noise estimation. We will consider the estimation of noise variance for each subject in the future to close the gap.

The main focus of this paper is to establish the theory and validate it with datasets where fully sampled images are available. We note that this approach is most useful in dynamic MRI and high-resolution static MRI, where the acquisition of fully sampled ground-truth images is not feasible. We will consider the application of ENSURE to these settings in our future work; this is beyond the scope of this preliminary publication. 

We note that SURE-based approaches have been recently used in MRI \cite{vineet}.  The main focus of the above paper is on uncertainty quantification using variational auto-encoders, while the authors also use SURE to approximate the MSE loss. By contrast, the main focus of this work is to introduce the ENSURE metric and to elaborately compare it with MSE and state-of-the-art unsupervised methods under different undersampling and noise settings. 
In addition, the authors approximate the MSE by SURE, assuming the density-weighted undersampling noise to be Gaussian \cite{vineet}; while the mean of the density-weighted noise term is shown to be zero in \cite{vineet}, it is not sufficient to guarantee that the undersampling-induced noise is Gaussian. By contrast, we define a new metric in \eqref{wpmse} which is the average of the errors over datasets and sampling patterns, which is well approximated by the ENSURE metric.

\begin{table*}[h!]
    \centering
    \begin{tabular}{|c|c|c|c|c|c|c|c|}
    \hline
       & SENSE: U   &  ADMM-net: U & FISTA-net: U & MoDL: U & MoDL-MS & MoDL VD1 & MoDL-VD2 \\
      \hline 
      MSE & 38.95 ± 4.2 & 42.09 ± 3.8 & 42.04 ± 3.8 & 42.30 ± 3.8 & 42.38 ± 3.9 & 43.90 ± 3.7 & 44.45 ± 3.9\\
          & 0.965 ± 0.02 & 0.983 ± 0.01 & 0.981 ± 0.01 & 0.985 ± 0.01 & 0.985 ± 0.01 & 0.99 ± 0.01 & 0.990 ± 0.01
\\
      \hline
      ENSURE & & 39.99 ± 4.0 & 40.24 ± 3.8 & 40.75 ± 3.7 & 40.67 ± 3.9 &41.69 ± 3.9& 42.17 ± 4.0\\
      && 0.972 ± 0.02 & 0.972 ± 0.02 & 0.973 ± 0.02 & 0.973 ± 0.02 &0.98±0.01& 0.983 ± 0.01\\
      \hline
    \end{tabular}
    \caption{ Quantitative results on fastMRI dataset. The values represent the average of PSNR(dB) $\pm$ standard deviations on the test dataset.}
    \label{fastMRIval}
\end{table*}
\section{Conclusions}

%\begin{figure} 
%\includegraphics[width=0.5\textwidth]{tmi/images/fig_mse.png}
%\includegraphics[width=0.5\textwidth]{tmi/images/fig_ensure.png}
%  \caption{Comparison of multiple algorithms on }%
%	\label{fastmrifig}
%\end{figure}

We introduced a novel loss function for the unsupervised training of deep-learning-based image reconstruction algorithms when fully sampled training data is not available. The proposed approach is the extension of the generalized SURE (GSURE) approach. The key distinction of the framework from GSURE is the assumption that different images are measured by different sampling operators. We evaluate the expectation of the GSURE losses over the sampling patterns to obtain the ENSURE loss function, which is an unbiased estimate for the true MSE. Our theoretical results show that the proposed ENSURE loss function closely approximates the true MSE without requiring the fully sampled images. More specifically, the use of the different sampling operators enables us to obtain a better approximation of the loss than GSURE, which approximates the projected MSE; the MSE of the projections of the images to the range space of the measurement operator is a poor approximation of the true MSE, especially in highly undersampled settings. The experiments confirm that the deep learning networks trained using the ENSURE approach  without fully sampled training data closely resemble the networks trained using a supervised loss function. While our focus in this work was on an MRI reconstruction problem, this approach is broadly applicable to general inverse problems, including deblurring and tomography. 

%\begin{comment}

\appendices

\section*{Appendix}

From (1), we note that $
\bs{u}=\A ^H \mathbf C^{-1} \bs{y_s}$ is Gaussian distributed with mean $\A^H\mathbf C^{-1} \A \bs \rho $ and covariance matrix $\A^H\mathbf C^{-1}\A$, i.e.,
\begin{eqnarray}
		p(\bs{u}|\bs \rho) &= \mathcal{N}\left( \A^H \mathbf C^{-1} \A \bs \rho , \A^H \mathbf C^{-1}\A\right)\\
		&= q(\bs{u}) ~\exp \left( \bs \rho^T \bs{u} - g(\bs\rho) \right ). 
\end{eqnarray}
Here, $q(\bs u)$ is independent of $\bs\rho$. Similarly, $g(\bs \rho)$ is dependent on $\rho$, which is not often available, and is independent of $\bs u$ :
\begin{eqnarray}
		q(\bs{u}) &=& K \exp\left[-\frac{1}{2}\bs {u}^H \left[\A^H\mathbf C^{-1} \A \right]^{\dagger}\bs {u} \right] \\
		g(\bs\rho) &=&\frac{1}{2} \bs\rho^H \A^H \mathbf C^{-1} \A \bs\rho
\end{eqnarray}

Here, $K$ is a constant. We consider the third term in \eqref{expanded}. Denoting $\mbf D=\mbf{ W_s}^T\mbf{ W_s}$, we obtain
\begin{eqnarray*}
\Eus \left[ \rec^T \mbf D \bs \rho \right] 
		&=&  \int_{-\infty}^{+\infty}  \rec^T \mbf D \bs \rho \; \underbrace{q(\bs{u})\overbrace{ \exp \bigl (\bs \rho^T\bs{u}-g(\bs\rho)}^{h(\bs{u})} \bigr)}_{p(\bs u)}  d\bs{u}
\end{eqnarray*}
		
Let   $h(\bs{u})= \exp \bigl (\bs \rho^T\bs{u}-g(\bs\rho) \bigr)$, then $\nabla_{\bs u} h\left(\bs u\right) = \bs\rho~ h\left(\bs u\right)$. Substituting for $\bs\rho ~h\left(\bs u\right)$, we obtain
\begin{eqnarray*}\nonumber
\Eus \left[ \rec^T \mbf D \bs \rho \right]  &=& \int_{-\infty}^{+\infty}   \rec^T \mbf D ~q(\bs{u}) \; \nabla_{\bs u} h\left(\bs u\right)d\bs{u}\\\nonumber
		%=& \left < q(\bs{u_s}) \rec, \mbf D \frac{d \exp \bigl (\rho^T\bs{u_s}-g(\bs \rho)\bigr) }{d\bs{u_s}} \right >\\
		&=& \left\langle q(\bs{u})\mbf D\rec,\, \nabla_{\bs u} h\left(\bs u\right)  \right\rangle\\\nonumber
		&=& - \left\langle h(\bs{u}),  \nabla_{\bs{u} } \cdot \left[ q(\bs{u})\mbf D\rec \right]\right\rangle
\end{eqnarray*}
Expanding the second term in the inner-product using the chain rule, we obtain
\begin{eqnarray*}
\nabla_{\bs{u}}  \cdot \left[ q(\bs{u})\mbf D\rec \right] = q(\bs{u})  \nabla_{\bs{u} } \cdot \mbf D\rec + \mbf D\rec \cdot \nabla_{\bs{u}}q(\bs{u})  \\
= q(\bs{u}) \left(\nabla_{\bs{u} } \cdot \mbf D\rec + \mbf D\rec\cdot \frac{\nabla_{\bs{u}} q(\bs{u})}{q(\bs{u})}\right)\\
= q(\bs{u})\left( \nabla_{\bs{u} } \cdot \mbf D\rec + \mbf D\rec\cdot ~\underbrace{\nabla_{\bs{u}} \ln q(\bs{u})}_{-\bs \rho_{\rm LS}}\right)
\end{eqnarray*}
 In the last step, we used the property
\begin{equation}\label{lsrecon}
-\nabla_{\bs u} \ln q(\bs{u})=   \left[\A^H\mathbf C^{-1} \A \right]^{\dagger} \A^H \mathbf C^{-1} \bs{y_s} = \bs \rho_{\rm LS}
\end{equation}
Because $q(\bs u) h(\bs u) = p(\bs u)$, we obtain
\begin{eqnarray*}
\Eus \left[ \rec^T \mbf D \bs \rho \right] 
		&=& - \mathbb E_{\bs u}\left[ \nabla_{\bs{u}} \cdot \mbf D \rec \right] + \mathbb E_{\bs u}\left[\rec^T \mbf D\bs \rho_{\rm LS}  \right]
\end{eqnarray*}

%\footnote{Supplementary materials are available in the supporting documents/multimedia tab. }
%\end{comment}
\bibliographystyle{IEEEtran}
\balance
% Generated by IEEEtran.bst, version: 1.14 (2015/08/26)

%\bibliography{IEEEabrv,bibTexUnsup}

\begin{thebibliography}{10}
\providecommand{\url}[1]{#1}
\csname url@samestyle\endcsname
\providecommand{\newblock}{\relax}
\providecommand{\bibinfo}[2]{#2}
\providecommand{\BIBentrySTDinterwordspacing}{\spaceskip=0pt\relax}
\providecommand{\BIBentryALTinterwordstretchfactor}{4}
\providecommand{\BIBentryALTinterwordspacing}{\spaceskip=\fontdimen2\font plus
\BIBentryALTinterwordstretchfactor\fontdimen3\font minus
  \fontdimen4\font\relax}
\providecommand{\BIBforeignlanguage}[2]{{%
\expandafter\ifx\csname l@#1\endcsname\relax
\typeout{** WARNING: IEEEtran.bst: No hyphenation pattern has been}%
\typeout{** loaded for the language `#1'. Using the pattern for}%
\typeout{** the default language instead.}%
\else
\language=\csname l@#1\endcsname
\fi
#2}}
\providecommand{\BIBdecl}{\relax}
\BIBdecl

\bibitem{lustig2008compressed}
M.~Lustig, D.~L. Donoho, J.~M. Santos, and J.~M. Pauly, ``Compressed sensing
  {MRI},'' \emph{IEEE Signal Process. Mag.}, vol.~25, no.~2, p.~72, 2008.

\bibitem{candes2007sparsity}
E.~Candes and J.~Romberg, ``{Sparsity And Incoherence In Compressive
  Sampling},'' \emph{Inverse Probl.}, vol.~23, no.~3, p. 969, 2007.

\bibitem{fessler2010magazine}
J.~A. Fessler, ``{Model-Based Image Reconstruction for {MRI}},'' \emph{{IEEE}
  Signal Process. Mag.}, vol.~27, no.~4, pp. 81--89, 2010.

\bibitem{wangCTtmi2017}
H.~Chen, Y.~Zhang, M.~K. Kalra, F.~Lin, Y.~Chen, P.~Liao, J.~Zhou, and G.~Wang,
  ``{Low-Dose {CT} with a Residual Encoder-Decoder Convolutional Neural
  Network},'' \emph{{IEEE} Trans. Med. Imag.}, vol.~36, no.~12, pp. 2524--2535,
  2017.

\bibitem{jong2019kspace}
Y.~Han, L.~Sunwoo, and J.~C. Ye, ``k-space deep learning for accelerated
  {MRI},'' \emph{{IEEE} Trans. Med. Imag.}, 2019.

\bibitem{modl}
H.~K. Aggarwal, M.~P. Mani, and M.~Jacob, ``{{MoDL}: Model Based Deep Learning
  Architecture for Inverse Problems},'' \emph{{IEEE} Trans. Med. Imag.},
  vol.~38, no.~2, pp. 394--405, 2019.

\bibitem{admmnet}
Y.~Yang, J.~Sun, H.~Li, and Z.~Xu, ``{Deep {ADMM-Net} for Compressive Sensing
  {MRI}},'' in \emph{Advances in Neural Information Processing Systems 29},
  2016, pp. 10--18.

\bibitem{adler2018learned}
J.~Adler and O.~{\"O}ktem, ``Learned primal-dual reconstruction,'' \emph{{IEEE}
  Trans. Med. Imag.}, vol.~37, no.~6, pp. 1322--1332, 2018.

\bibitem{hammernik}
K.~Hammernik, T.~Klatzer, E.~Kobler, M.~P. Recht, D.~K. Sodickson, T.~Pock, and
  F.~Knoll, ``{Learning a Variational Network for Reconstruction of Accelerated
  {MRI} Data},'' \emph{Magnetic resonance in Medicine}, vol.~79, no.~6, pp.
  3055--3071, 2017.

\bibitem{zhang2017magazine}
L.~Zhang and W.~Zuo, ``{Image Restoration: From Sparse and Low-Rank Priors to
  Deep Priors},'' \emph{{IEEE} Signal Process. Mag.}, vol.~34, no.~5, pp.
  172--179, 2017.

\bibitem{dagan}
G.~Yang, S.~Yu, H.~Dong, G.~Slabaugh, P.~L. Dragotti, X.~Ye, F.~Liu,
  S.~Arridge, J.~Keegan, Y.~Guo \emph{et~al.}, ``{DAGAN: Deep De-Aliasing
  Generative Adversarial Networks For Fast Compressed Sensing MRI
  Reconstruction},'' \emph{{IEEE} Trans. Med. Imag.}, vol.~37, no.~6, pp.
  1310--1321, 2017.

\bibitem{fastMRI}
J.~Zbontar, F.~Knoll, A.~Sriram, T.~Murrell, Z.~Huang, M.~J. Muckley,
  A.~Defazio, R.~Stern, P.~Johnson, M.~Bruno \emph{et~al.}, ``{fastMRI}: An
  open dataset and benchmarks for accelerated {MRI},'' \emph{arXiv preprint
  arXiv:1811.08839}, 2018.

\bibitem{lcurve}
P.~C. Hansen and D.~P. O'Leary, ``{The Use Of The L-Curve In The Regularization
  Of Discrete Ill-Posed Problems},'' \emph{SIAM journal on scientific
  computing}, vol.~14, no.~6, pp. 1487--1503, 1993.

\bibitem{gcv}
G.~H. Golub, M.~Heath, and G.~Wahba, ``{Generalized Cross-Validation As a
  Method For Choosing a Good Ridge Parameter},'' \emph{Technometrics}, vol.~21,
  no.~2, pp. 215--223, 1979.

\bibitem{sure}
C.~M. Stein, ``{Estimation Of The Mean Of A Multivariate Normal
  Distribution},'' \emph{The annals of Statistics}, pp. 1135--1151, 1981.

\bibitem{sureSink}
D.~L. Donoho and I.~M. Johnstone, ``{Adapting To Unknown Smoothness Via Wavelet
  Shrinkage},'' \emph{Journal of the American Statistical Association},
  vol.~90, no. 432, pp. 1200--1224, 1995.

\bibitem{surelet}
T.~Blu and F.~Luisier, ``{The {SURE-LET} Approach To Image Denoising},''
  \emph{{IEEE} Trans. Image Process.}, vol.~16, no.~11, pp. 2778--2786, 2007.

\bibitem{purelet}
F.~Luisier, C.~Vonesch, T.~Blu, and M.~Unser, ``{Fast Interscale Wavelet
  Denoising Of Poisson-Corrupted Images},'' \emph{Signal processing}, vol.~90,
  no.~2, pp. 415--427, 2010.

\bibitem{ldampSURE}
C.~A. Metzler, A.~Mousavi, R.~Heckel, and R.~G. Baraniuk, ``{Unsupervised
  Learning with Stein's Unbiased Risk Estimator},'' \emph{arXiv preprint
  arXiv:1805.10531}, 2018.

\bibitem{koreanDenoisingNIPS}
M.~Zhussip, S.~Soltanayev, and S.~Y. Chun, ``{Extending Stein's Unbiased Risk
  Estimator to Train Deep Denoisers with Correlated pPairs of Noisy Images},''
  in \emph{Advances in Neural Information Processing Systems}, 2019.

\bibitem{koreanReconCVPR2019}
M.~Zhussip, S.~Soltanayev, and S.~Y. Chun, ``Training deep learning based image
  denoisers from undersampled measurements without ground truth and without
  image prior,'' in \emph{IEEE/CVF Conference on Computer Vision and Pattern
  Recognition}, 2019, pp. 10\,255--10\,264.

\bibitem{eldarGSURE}
Y.~C. Eldar, ``{Generalized {SURE} for Exponential Families: Applications to
  Regularization},'' \emph{{IEEE} Trans. Signal Process.}, vol.~57, no.~2, pp.
  471--481, 2008.

\bibitem{deepGSURE}
S.~Abu-Hussein, T.~Tirer, S.~Y. Chun, Y.~C. Eldar, and R.~Giryes, ``Image
  restoration by deep projected {GSURE},'' in \emph{Proceedings of the IEEE/CVF
  Winter Conference on Applications of Computer Vision}, 2022, pp. 3602--3611.

\bibitem{noise2noise}
J.~Lehtinen, J.~Munkberg, J.~Hasselgren, S.~Laine, T.~Karras, M.~Aittala, and
  T.~Aila, ``{ {Noise2Noise}: Learning Image Restoration without Clean Data},''
  in \emph{International Conference on Machine Learning}, 2018, pp. 2965--2974.

\bibitem{noise2void}
A.~Krull, T.-O. Buchholz, and F.~Jug, ``{ {Noise2Void}-Learning Denoising From
  Single Noisy Images},'' in \emph{IEEE Conference on Computer Vision and
  Pattern Recognition}, 2019, pp. 2129--2137.

\bibitem{ssdu}
B.~Yaman, S.~A.~H. Hosseini, S.~Moeller, J.~Ellermann, K.~U{\u{g}}urbil, and
  M.~Ak{\c{c}}akaya, ``{Self-Supervised Learning Of Physics-Guided
  Reconstruction Neural Networks Without Fully Sampled Reference Data},''
  \emph{Magn. Reson. Med.}, pp. 3172--3191, 2020.

\bibitem{probN2V}
A.~Krull, T.~Vi{\v{c}}ar, M.~Prakash, M.~Lalit, and F.~Jug, ``{Probabilistic
  Noise2Void: Unsupervised Content-Aware Denoising},'' \emph{Frontiers in
  Computer Science}, vol.~2, p.~5, 2020.

\bibitem{mcsure}
S.~Ramani, T.~Blu, and M.~Unser, ``{{Monte-Carlo SURE}: A Black-Box
  Optimization Of Regularization Parameters For General Denoising
  Algorithms},'' \emph{{IEEE} Trans. Image Process.}, vol.~17, no.~9, pp.
  1540--1554, 2008.

\bibitem{dip2018}
D.~Ulyanov, A.~Vedaldi, and V.~Lempitsky, ``{Deep Image Prior},'' in \emph{IEEE
  Conference on Computer Vision and Pattern Recognition}, 2018, pp. 9446--9454.

\bibitem{Pruessmann1999}
K.~P. Pruessmann, M.~Weiger, M.~B. Scheidegger, and P.~Boesiger, ``{SENSE:
  Sensitivity encoding for fast MRI},'' \emph{Magnetic Resonance in Medicine},
  vol.~42, no.~5, pp. 952--962, 1999.

\bibitem{espirit2014}
M.~Uecker, P.~Lai, M.~J. Murphy, P.~Virtue, M.~Elad, J.~M. Pauly, S.~S.
  Vasanawala, and M.~Lustig, ``{ESPIRiT - An eigenvalue approach to
  autocalibrating parallel {MRI}: Where {SENSE} meets {GRAPPA}},''
  \emph{Magnetic Resonance in Medicine}, vol.~71, no.~3, pp. 990--1001, 2014.

\bibitem{ssim}
Z.~Wang, A.~C. Bovik, H.~R. Sheikh, and E.~P. Simoncelli, ``{Image Quality
  Assessment: From Error Visibility To Structural Similarity},'' \emph{{IEEE}
  Trans. Image Process.}, vol.~13, no.~4, pp. 600--612, 2004.

\bibitem{modlmussels}
H.~K. Aggarwal, M.~P. Mani, and M.~Jacob, ``{MoDL-MUSSELS}: Model-based deep
  learning for multishot sensitivity-encoded diffusion {MRI},'' \emph{{IEEE}
  Trans. Med. Imag.}, 2019.

\bibitem{vineet}
V.~Edupuganti, M.~Mardani, S.~Vasanawala, and J.~Pauly, ``{Uncertainty
  quantification in Deep MRI reconstruction},'' \emph{IEEE Transactions on
  Medical Imaging}, vol.~40, no.~1, pp. 239--250, 2021.

\end{thebibliography}

\end{document}